
\documentclass{article}

\usepackage{microtype}
\usepackage{graphicx}
\usepackage{subfigure}
\usepackage{booktabs} %
\usepackage{xspace}
\usepackage{xcolor}

\usepackage{hyperref}

\usepackage[accepted]{icml2021}

\makeatletter
\renewcommand{\ICML@appearing}{}
\makeatother

\usepackage{amsfonts}
\usepackage{amsmath}
\usepackage{amssymb}
\usepackage{amsthm}
\usepackage{appendix}
\usepackage{mathtools}

\newtheorem{lemma}[]{Lemma}

\newtheorem{definition}[]{Definition}
\newcommand{\ie}{\textit{i.e.}\xspace}
\newcommand{\calA}{\mathcal{A}}

\newcommand{\calH}{\mathcal{H}}
\newcommand{\calI}{\mathcal{I}}
\newcommand{\calM}{\mathcal{M}}
\newcommand{\calT}{\mathcal{T}}
\newcommand{\prob}{\mathbb{P}}

\newcommand{\E}{\mathbb{E}}

\DeclarePairedDelimiterX{\infdivx}[2]{(}{)}{%
  #1\;\delimsize\|\;#2%
}
\newcommand{\rdp}{D_\alpha\infdivx}
\newcommand{\badoutcomes}{probability preservation~}

\icmltitlerunning{Defending against Reconstruction Attacks}

\begin{document}

\twocolumn[
\icmltitle{Defending against Reconstruction Attacks with Rényi Differential Privacy}

\icmlsetsymbol{equal}{*}
\newcommand{\icmlEqualContribution}{}

\begin{icmlauthorlist}
\
\icmlauthor{Pierre Stock}{ed}
\icmlauthor{Igor Shilov}{ed}
\icmlauthor{Ilya Mironov}{ed}
\icmlauthor{Alexandre Sablayrolles}{ed}
\end{icmlauthorlist}

\icmlaffiliation{ed}{Meta AI}

\icmlcorrespondingauthor{Pierre Stock}{pstock@fb.com}
\icmlcorrespondingauthor{Alexandre Sablayrolles}{asablayrolles@fb.com}

\icmlkeywords{Machine Learning, ICML}

\vskip 0.3in
]

\printAffiliationsAndNotice{} %

\begin{abstract}
Reconstruction attacks allow an adversary to regenerate data samples of the training set using access to only a trained model.
It has been recently shown that simple heuristics can reconstruct data samples from language models, making this threat scenario an important aspect of model release.
Differential privacy is a known solution to such attacks, but is often used with a relatively large privacy budget ($\varepsilon \geq 8$) which does not translate to meaningful guarantees.
In this paper we show that, for a same mechanism, we can derive privacy guarantees for reconstruction attacks that are better than the traditional ones from the literature.
In particular, we show that larger privacy budgets do not protect against membership inference, but can still protect extraction of rare secrets.
We show experimentally that our guarantees hold against various language models, including GPT-2 fine-tuned on Wikitext-103.

\end{abstract}

\section{Introduction}

Probabilistic generative models are trained to assign high likelihood to data from the training set. 
In the case of language models, the decomposition $\prob(x_1, \dots, x_T) = \Pi_{i=1}^T \prob(x_i ~|~x_{<i}) $ also allows for efficient sampling of sequences. 
Given such models will \textit{overfit} on the training set data, sampling from a trained model will sometimes yield verbatim sentences from the training set.
\citet{carlini2021extracting} leveraged this effect along with clever filtering techniques to produce samples that are likely to be in the training set.
Their work demonstrated that \emph{reconstruction attacks} are not only possible on large-scale generative language models such as GPT-2~\cite{radford2019language} but also successful: their best attack reaches $66\%$ precision on the top-$100$ generated sentences.
Another category of attacks is \emph{membership inference}, where the adversary has access to both the trained model and a data sample, and has to predict whether this data sample comes from the training set. 
This attack is easier because the task of the adversary is simpler. 

The standard defense against such privacy attacks is differential privacy (DP)~\cite{dwork2006calibrating, dwork2014algorithmic}.
DP defines a privacy budget $\varepsilon$ that can be used to control the privacy/utility tradeoff of the trained model.
However, there is no consensus over acceptable values of~$\varepsilon$ and,  in practice,~$\varepsilon$ is often chosen to defeat practical membership inference attacks~\cite{watson2021importance, carlini2021membership}.
In this paper, we show that Rényi differential privacy (RDP)~\cite{mironov2017renyi} can actually provide guarantees regarding the probability of reconstructing a sample.
The RDP analysis gives better guarantees \textit{for the same mechanism}.
In particular, we show that there is an intermediate regime in which membership inference is not protected but full reconstruction of sufficiently high-entropy secrets remains difficult.
This means that an adversary who knows a secret $s$ can decide if it was in the training set, but extracting it from the model stays hard if they do not know the secret $s$ in the first place.

We refer to samples that an adversary tries to reconstruct as \textit{secrets}.
Not all samples from the training set would be considered ``secret" in the sense that their content is public knowledge, such as newspaper headlines. 
We circumvent this issue by considering all samples secrets, and quantifying their level of secrecy by the number of unknown bits of information.
Specifically, given a probabilistic reconstruction model~$\calA$ that generates secrets from a trained model $\theta$, $s \sim \calA(\theta)$, the secrecy of a sample $s$ is $b \triangleq \log_2 \left( 1 / \prob(\calA(\theta) = s)\right)$.
An adversary needs on average $2^b$ trials to chance upon the secret $s$. 
This number of trials corresponds to the verification cost paid by an adversary in many practical scenarios.
For example if the adversary is guessing a phone number, they have to actually dial this number to ``exploit" the secret. 

\begin{figure*}
    \centering
    \includegraphics[width=0.33\textwidth]{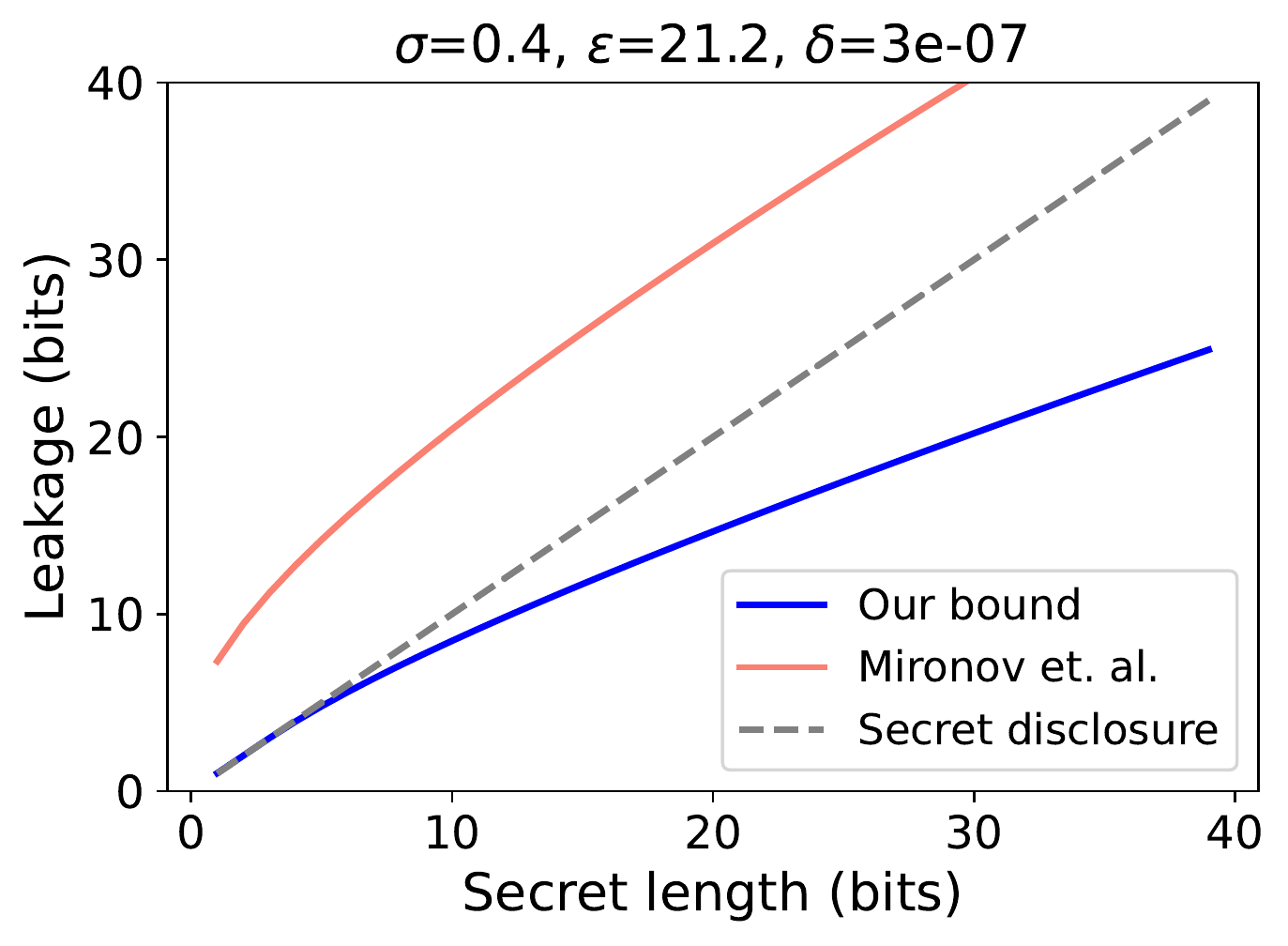}
    \includegraphics[width=0.33\textwidth]{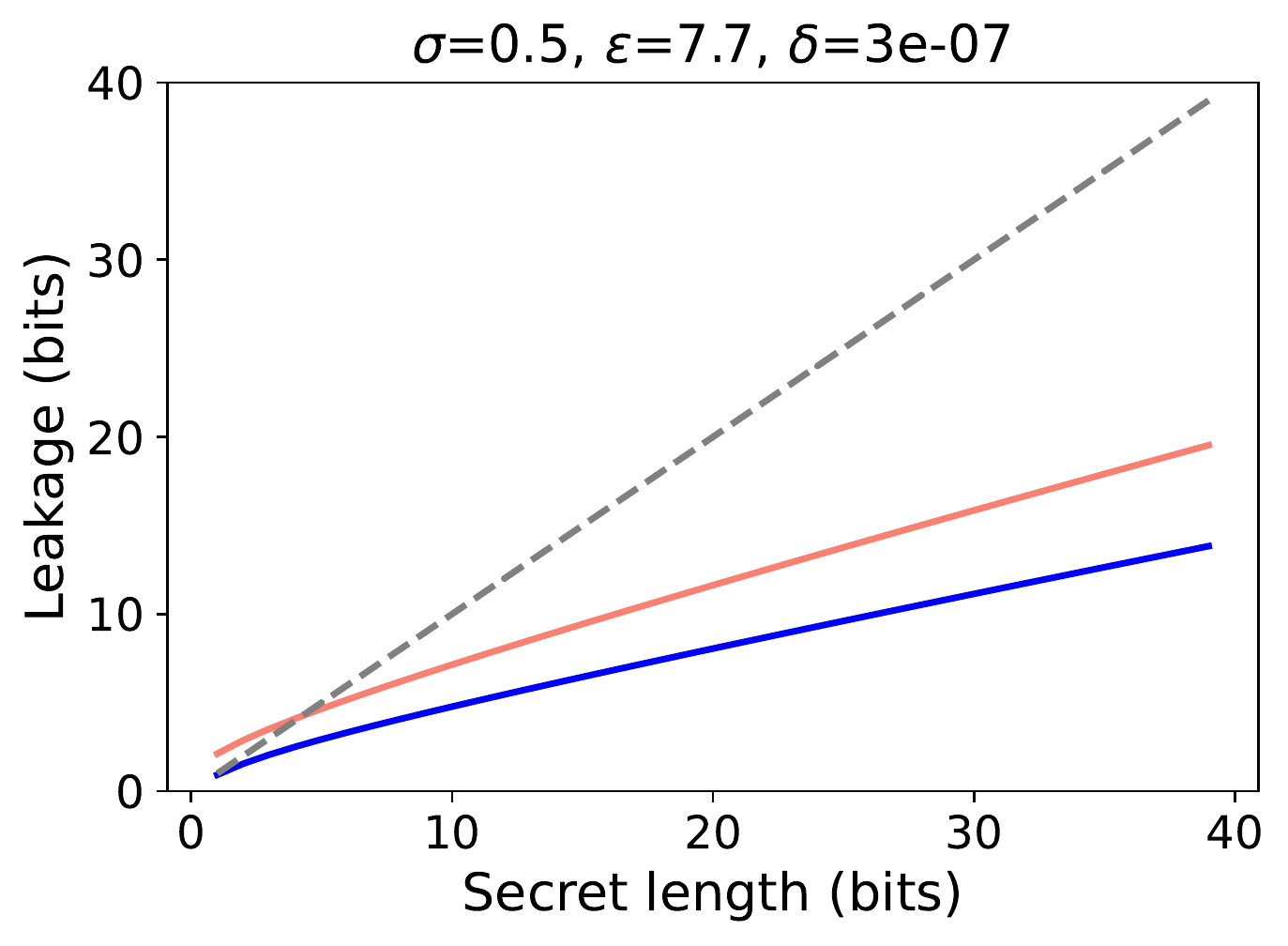}
    \includegraphics[width=0.33\textwidth]{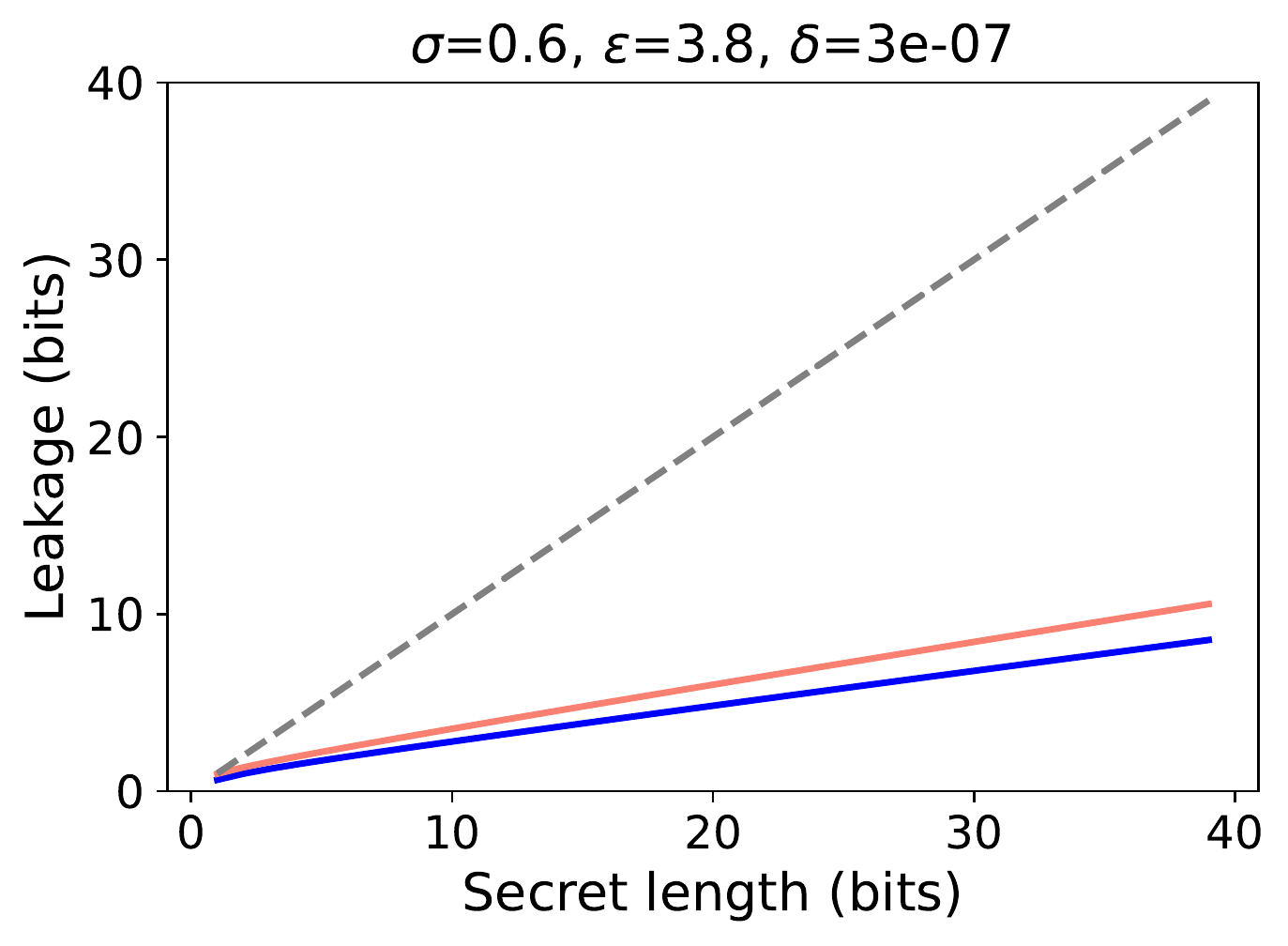}
    \vspace*{-5mm}
    \caption{\label{fig:theoretical}
    We train a Neural Network using DP-SGD and consider various levels of DP-noise $\sigma$ (increasing from left to right). We consider a secret of varying length (in bits $b$, $x$-axis) in the dataset and wish to estimate the number of bits form this secret that will leak after the training ($y$-axis). 
    For each level of DP-noise, we plot the upper bound $\min_{\alpha} h(\alpha, p_0)$ computed using \citet{mironov2019r} (see Equation~\eqref{eq:leakage_bound}) as well as our theoretical bound $L(p_0)$ against the secret length $b$ by setting $p_0=2^{-b}$.
    We additionally plot the line $y=x$: points below this line indicate that the secret will not entirely leak, points above means total leakage.
    To generate these plots, we use the settings of our real-life canary experiments in Section~\ref{subsec:wikitext}, with  $186$k training steps and a sampling rate $q=2.81\times10^{-4}$. \
    We also report the privacy budget $\varepsilon$ at $\delta=3 \times 10^{-7}$ in the plot titles computed using \citet{balle2020hypothesis}, the standard practice with DP training. The plots demonstrate that our guarantee prevents from total secret disclosure for various levels of DP noise $\sigma$.
    Furthermore, we observe that our bound is comparatively better for lower levels of the DP noise $\sigma$.
    }
\end{figure*}

A secret can have a non-zero probability even for a model that was not trained on this secret. 
As an extreme example, a language model generating uniformly random numbers will predict a $10$-digit phone number with probability $10^{-10}$.
The goal of a privacy-preserving training procedure is to ensure that the secret is not much more probable under a model that was trained on it. 

The length of the secret can vary depending on prior information: for phone numbers, knowing the area code reduces the secret from $10$ to $7$ unknown digits.
Fortunately, RDP guarantees work against \textit{any} prior knowledge, thanks to the post-processing property detailed in Section~\ref{subsec:dp}.

Our contributions are the following:
\begin{itemize}
    \item We use the \badoutcomes guarantee from RDP to derive better secret protection. In particular, we show that the length of the secret itself provides more privacy.
    \item We show empirically that our guarantee is tight for $n$-gram language models.
    \item We fine-tune language models with differential privacy and show that the observed leakage under our attack model is even smaller than the guarantee.
\end{itemize}

\section{Background}
Throughout the paper, we will use $\log$ to denote the natural logarithm and $\log_2$ to refer to the base $2$ logarithm.

\subsection{Privacy Attacks}

\paragraph{Membership Inference} attacks~\cite{homer2008resolving,shokri2017membership} determine, given a trained model and a data sample, whether the sample was part of the model's training set. 
Given a sample $x$ and model $\theta$ trained on a dataset $D$, the attacker's objective is to design a score function $\phi(\theta, x)$ such that $\phi$ is high when $x \in D$ and low otherwise.
Various score functions have been proposed, such as the gradient norm~\cite{nasr2019comprehensive}, the model confidence~\cite{salem2018ml}, or an output of a neural network~\cite{shokri2017membership}.
Surprisingly, choosing the score function to be the loss $\mathcal L(\theta, x)$ is a simple and effective approach~\cite{sablayrolles2019white,salem2018ml}.

Recent works argue that a practically relevant threat model of membership inference is to confidently predict training set membership of a few samples rather than guessing well on average~\cite{watson2021importance, carlini2021membership,ye2021enhanced}.
Such membership inference attacks are evaluated using true and false positive rates.
In this context, calibrating the loss by centering~\cite{watson2021importance} or by fitting a Gaussian likelihood~\cite{carlini2021membership} further improves the performance of the attack and yields state-of-the-art results.

This kind of attacks essentially identifies uncommon samples on which a model is overconfident. 
These samples are also the focus of our study.

\paragraph{Language Model Extraction.} 
Disclosure of \emph{verbatim} training sentences by state-of-the-art generative sequence models poses privacy issues. 
For instance, commercial auto-completion networks trained on sensitive user data may surface exact training samples such as credit card numbers with an appropriately chosen prompt. \citet{carlini2019secret} propose a methodology to quantify the exposure of unique and secret training sequences to such extraction attacks. 
To do so, the authors insert a \emph{canary} in the training set and measure its exposure as the excess belief that model has in the canary over random chance. 
The authors then conclude that differential privacy is a suitable mitigation technique, although at the cost of some utility. Similarly, \citet{carlini2021extracting} show that it is possible to extract hundreds of training sentences in a two-step procedure when attacking GPT-2, a language model trained on large scrapes of the public internet \cite{radford2019language}. 
First, the authors generate 200,000 samples for each of the following sampling strategies: sampling with a linearly decaying temperature, top-$k$ sampling and sampling conditionally on random Internet text. 
Then, they reduce the problem to Membership Inference among the generated samples and select the top-100 samples most likely to be members according to different filtering strategies.
They then manually check for each sentence that it was indeed in the training set by querying a search engine indexing the training set. 
Depending on the metrics to rank the generated sentences (loss of the target model calibrated or not with a loss of a smaller model for instance), the authors identify a few dozens of training sentences from the top-100 sentences.

\subsection{Differential Privacy}
\label{subsec:dp}
Differential Privacy is a leading standard for privacy guarantees~\cite{dwork2006calibrating,dwork2014algorithmic}.
\begin{definition}
    A randomized mechanism $\calM \colon \mathcal D \to \mathcal R$ satisfies $(\varepsilon, \delta)$-differential privacy (DP) if, for any adjacent inputs $D, D' \in \mathcal D$ and for any $S \subset \mathcal R$, we have
    \begin{equation*}
        \prob[\calM(D) \in S] \leq e^\varepsilon \prob[\calM(D') \in S)] + \delta. 
    \end{equation*}
\end{definition}
Datasets $D$ and $D'$ are \emph{adjacent} if they differ by at most one element. 
Originally, DP was defined with $\delta=0$, a setting called \textit{pure} DP.
Pure DP can be interpreted as a bound on the change in information contained in any release $\theta$, defined as $\calI(D, \theta)\triangleq-\log_2 \left( \prob(\calM(D) = \theta) \right)$, and guarantees that this quantity does not vary by more than~$\varepsilon$ between adjacent datasets. 
Unfortunately, mechanisms using Gaussian noise usually do not satisfy pure DP, thus the additional term $\delta > 0$ was introduced.

Furthermore, analysis of the subsampled Gaussian mechanism~\cite{abadi16deep,mironov2019r} requires the introduction of Rényi differential privacy (RDP)~\cite{mironov2017renyi} to leverage tighter composition properties. 
We recall here the properties of RDP that will be useful for the rest of the paper, while referring the reader to \citet{mironov2017renyi} for a more comprehensive overview.

\begin{definition}
    For two probability distributions $P$ and $Q$  defined over $\mathcal R$, the Rényi divergence of order $\alpha > 1$ is 
    \begin{equation*}
        \rdp{P}{Q} \triangleq \frac{1}{\alpha - 1}\log \E_{x\sim Q}\left(\frac{P(x)}{Q(x)}\right)^\alpha.
    \end{equation*}
\end{definition}

\begin{definition}
    A randomized mechanism $\calM \colon \mathcal D \to \mathcal R$ satisfies $(\alpha, d_\alpha)$-Rényi differential privacy (RDP) if, for any adjacent inputs $D, D' \in \mathcal D$, we have
    \begin{equation*}
        \rdp{\calM(D)}{\calM(D')} \leq d_\alpha.
    \end{equation*}
\end{definition}
RDP guarantees can be translated back into DP guarantees while the converse is not true, making RDP a strictly stronger property. 
If $\calM$ is $(\alpha, d_\alpha)$-RDP, then it is also $\left(d_\alpha + \frac{\log 1 / \delta}{\alpha - 1}, \delta\right)$-DP for any $0 < \delta < 1$. The bound is slightly improved by~\citet{balle2020hypothesis}.

\paragraph{Post-processing.}
As with $(\varepsilon, \delta)$-DP, $(\alpha, d_\alpha)$-RDP guarantees are preserved by post-processing: if $\calM$ is $(\alpha, d_\alpha)$-RDP and if $\mathcal A\colon \mathcal R \to \mathcal R'$ is a randomized mechanism, then $\mathcal{} A \circ \calM$ is $(\alpha, d_\alpha)$-RDP. 

\paragraph{Probability Preservation.}
A direct consequence of $(\alpha, d_\alpha)$-RDP is quantified by the following inequality:
\begin{equation}\label{eq:logratio}
    e^{-d_\alpha}p^{\alpha/(\alpha-1)}\leq p' \leq \left(e^{d_\alpha} p\right)^{(\alpha-1)/\alpha}
\end{equation}
where $p \triangleq \prob[\calM(D) \in S]$ and $p'\triangleq\prob[\calM(D') \in S]$. Informally, since $\calM(D)$ and $\calM(D')$ are close, the probabilities of any event $S$ under both $\calM(D)$ and $\calM(D')$ are also close.

In the remainder of the paper, we adapt the definitions above to the setting of Deep Learning. 
The dataset $D \in \mathcal D$ contains training samples $\{x^{(1)}, \dots, x^{(n)}\}$ and $\calM$ denotes a randomized training procedure. 

\paragraph{DP-SGD.} 
Private training of neural networks is usually conducted with DP-SGD~\cite{abadi16deep,song2013stochastic,bassily2014private} as follows. 
For each training step~$t$, we gather a batch (of variable length~$L$) of training examples $x^{(i)}$ by sampling each element without replacement with probability (or sampling rate) $q$. 
Then, we compute the per-sample gradients $g_t(x_i)$. Next, we bound the contribution of each sample by clipping the per-sample gradients using a clipping constant $C > 0$:
\begin{equation*}
    \bar g_t(x_i) = {g_t(x_i)} / \max(1, \|g_t(x_i)\|_2 / C).
\end{equation*} 
Finally, we compute the gradient for the batch $\tilde g_t$ as:
\begin{equation*}
    \tilde g_t = \frac{1}{L}\left(\sum_i \bar g_t(x_i) + \mathcal N\left(0, \sigma^2C^2\right)\right).
\end{equation*}
and use $\tilde g_t$ in the optimization step. 
Finally, an \emph{accountant} tracks the privacy budget $\varepsilon$ over all the training steps. 
This budget depends on $\delta$, on the sampling rate $q$, on the number of training steps and on the noise multiplier $\sigma$. 

\paragraph{NLP with Differential Privacy.}
Recent work has shown that fine-tuning language models to competitive accuracy with differential privacy is possible. 
For instance, \citet{li2021large} provide a recipe to directly fine-tune large transformer models with $50$ to $300$ million parameters directly with DP at privacy level $\varepsilon \in \{3,  8\}$ for various downstream tasks. Similarly, \citet{yu2021differentially} finetune privately competitive RoBERTa-Large models \cite{liu2019roberta} with $\varepsilon = 6.7$ by training only a fraction of the network's parameters using low-rank weight matrices or skip-connections.

\section{Our method}

We first derive an upper bound on the information leakage that is better than the one from \citet{mironov2019r}. Then, we present a \emph{lazy} method to identify samples that are likely to leak when performing a reconstruction attack.

\subsection{Reconstruction and Canaries}

The goal of reconstruction attacks is to surface training samples given access to the target network's weights. 
If $D$ is a dataset and $s$ a secret, we denote by $D' = D \cup \{s\}$. 
Recall that $\calM(D)$ (resp., $\calM(D')$) represents the distribution of the target network's weights after training on $D$ (resp., $D'$). 
We assume $\calM$ is $(\alpha, d_\alpha)$-RDP. 
Finally, $\calA$ denotes the attack mechanism that takes the target network's weights and outputs a probability distribution over secrets. 
For a given secret s, we note $p_0 \triangleq \prob[\calA(\calM(D)) = s]$ and $p_1 \triangleq \prob[\calA(\calM(D')) = s]$.
We assume that the dataset $D$ does not contain information about the secret $s$; we will thus sometimes refer to $p_0$ as the prior, and $p_1$ as the posterior.
Then, using the Probability Preservation inequality~\eqref{eq:logratio},

\begin{equation}
    \label{eq:leakage_bound}
    \underbrace{-d_\alpha - \frac{\log(1/p_0)}{\alpha - 1}}_{\triangleq -h(\alpha, p_0)} \leq \log\left(\frac{p_1}{p_0}\right)  \leq \underbrace{d_\alpha \frac{\alpha -1}{\alpha} + \frac{\log (1/p_0)}{\alpha}}_{\triangleq l(\alpha, p_0)},
\end{equation}
Since the \emph{leakage} $\log\left(\frac{p_1}{p_0}\right)$ is independent of $\alpha$, we may strengthen the bound by taking a minimum over all orders:
\begin{equation}
    \label{eq:min_bound}
    L(p_0)\triangleq \min_{\alpha>1}~l(\alpha, p_0).
\end{equation}
Finally, since $\alpha > \alpha - 1$, we have $l(\alpha, p_0) < h(\alpha, p_0)$.

\paragraph{Comparison with traditional DP guarantees.}
Equation~\eqref{eq:leakage_bound} implies a bound on the absolute leakage 
\[
    \left| \log \left(\frac{p_1}{p_0} \right) \right| \leq \max(l(\alpha, p_0), h(\alpha, p_0)) = h(\alpha, p_0).
\]
If we take $\delta = p_0$, this corresponds to the traditional $\varepsilon$ given by $ \min_\alpha h(\alpha, \delta)$.
Given that $l(\alpha, p_0) < h(\alpha, p_0)$, our bound on leakage is \textit{better} than traditional guarantees from $(\varepsilon, \delta)$-DP.
These values are shown in Figures~\ref{fig:theoretical} and~\ref{fig:leakage_steps}: for lower values of the noise multiplier $\sigma$, the bound provided by $l$ is much lower than $h$, while the gap between the two decreases as the noise multiplier $\sigma$ gets bigger.

We can extend that analogy and compare our leakage guarantee to the privacy budget $\varepsilon$ that would be given for a probability of failure $\delta = p_0$. 
Numerically, our bounds are also better than the slightly tighter bounds provided by \citet{balle2020hypothesis}.
Nominally, a bound on the absolute leakage is a stronger guarantee, because it also prevents the posterior $p_1$ from becoming \emph{smaller} than the prior $p_0$.
However, we argue that this case is much less relevant from the risk perspective: even if the probability of other outcomes $s'$ becomes less likely, ``mechanically" increasing the probability of the secret $s$, the probability of the secret is still bounded by the right-hand side of Equation~\eqref{eq:leakage_bound}.
Furthermore, negative membership inference, \ie, predicting that a sample was \emph{not} part of a training set, is not as powerful as positive membership inference.
An individual can be present in a data collection pool, but not included in a particular training set for a variety of reasons, hence absence from the training set does not imply absence from the data collection stage.

Finally, thanks to post-processing, Equation~\eqref{eq:leakage_bound} is true \emph{regardless of the attack mechanism} that is considered. 
Indeed, if we denote by $\calM$ the training mechanism and by $\calA$ a reconstruction attack, attacks on trained models are given by $\calA \circ \calM$ and thus enjoy RDP guarantees.

\begin{figure}
    \centering
    \includegraphics[width=0.5\textwidth]{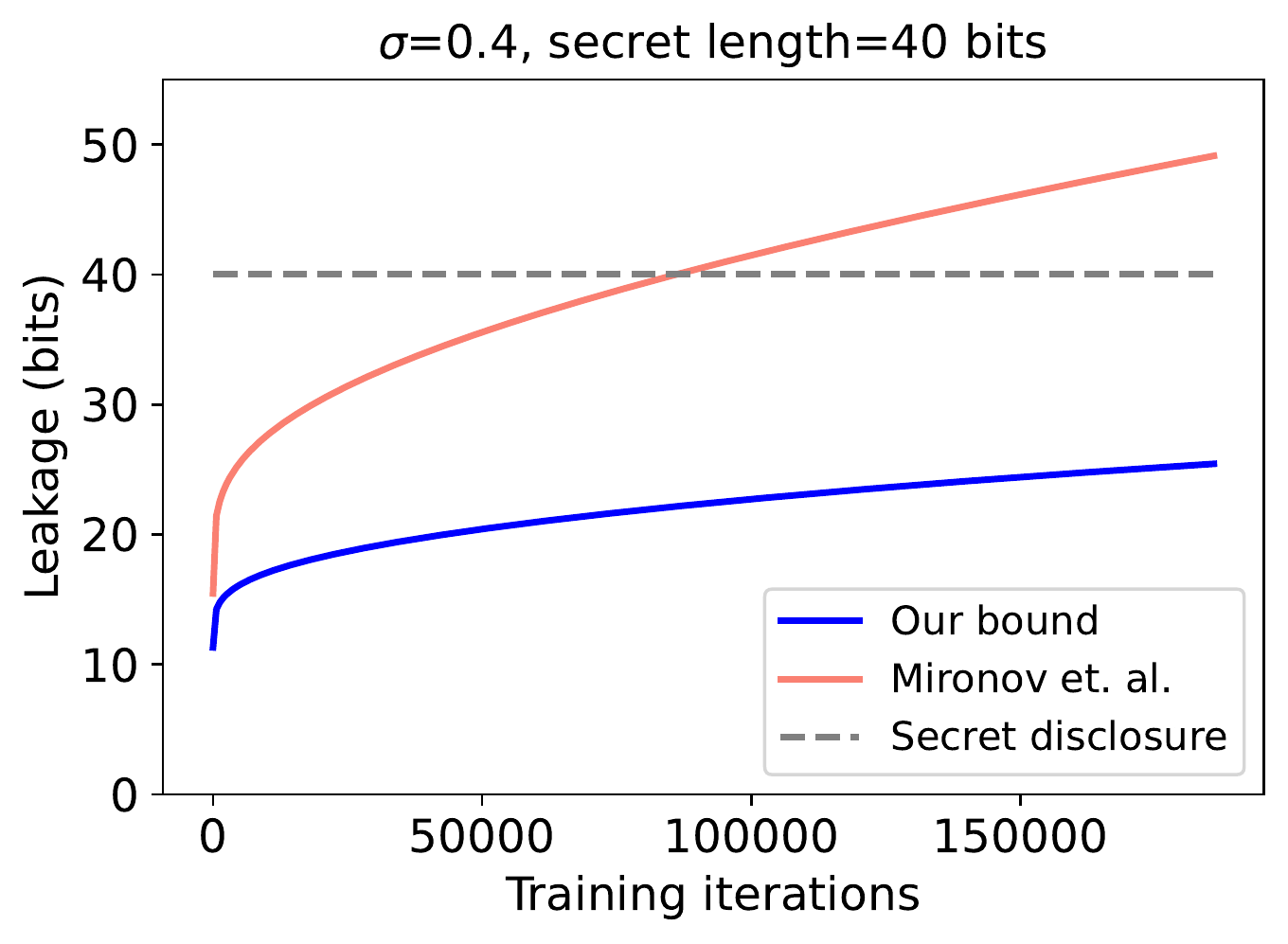}
    \vspace*{-5mm}
    \caption{\label{fig:leakage_steps}
    We consider training a Neural Network with DP-SGD with the setup described in Figure~\ref{fig:theoretical}, corresponding to our real-life experiments detailed in Section~\ref{subsec:wikitext}. Here, we fix the secret length to $b=40$ bits and plot the dependency of the privacy budget with the number of training iterations with an amount of DP noise $\sigma = 0.4$.
    Interpretation: our bound $L(p_0)$ shows that the leakage is only partial even for a large number of training iterations, contrary the bound $\min_{\alpha} h(\alpha, p_0)$ \cite{mironov2019r} that becomes vacuous after a certain amount of iterations.
    }
\end{figure}

\paragraph{Length of the Secret.}
With a slight abuse of notation, let us consider the leakage (in bits) as a function of the number of secret bits $L_2(b) \triangleq \min_\alpha d_\alpha \frac{\alpha -1}{\alpha \log(2)} + \frac{b }{\alpha}$.
Our leakage function $b \mapsto L_2(b)$ satisfies two properties: it is non-decreasing and concave (see Appendix~\ref{app:proof_leakage} for proof).
Thus, longer secrets will leak more in absolute terms, but less in relative terms.
Let us assume that we are looking at a particular secret of binary nature (such as whether some individual owns a particular object for example), with prior $p_0=2^{-b}$.
Even though there are two possible outcomes, the prior $p_0$ is not necessarily equal to $1/2$: if the item is rare, the prior is more likely to be smaller.
The log-posterior $\log_2(p_1) < -b+L_2(b)$, a non-increasing function (see Appendix~\ref{app:proof_leakage} for proof).
This upper-bound is minimized when $b=0$: longer secrets lead to smaller values of $p_1$.
The most sensitive secrets are the ones that are the most likely in the first place, and the length of the secret itself acts as a protection against reconstruction attacks.

In particular, membership inference attacks correspond to attacks with a low number of secret bits.
These attacks are usually conducted with a prior probability of $1/2$~\cite{yeom2018privacy,sablayrolles2019white}, and even though some works consider different ratios of positive to negative members~\cite{watson2021importance,rezaei2021difficulty}, they stay within a factor of $1$ to $10$.
Some privacy settings (noise level $\sigma$, sampling rate $q$ and number of steps) will thus not protect against membership inference (because of the low number of bits to guess) but still will not allow for full reconstruction of the rarest samples (high number of bits).

\subsection{Lazy Sampling: Identifying Samples Likely to Leak}

Let us assume that we want to generate a secret $s$ from a model $\theta$, using the probabilistic attack model $\calA(\theta)$. 
The method of \citet{carlini2021extracting} requires sampling hundreds of thousands of times from $\calA(\theta)$ in the hope that we find the secret $s$ (and then filter out most of the generated samples). 
Fortunately, most attacks of \citet{carlini2021extracting} are amenable to \textit{lazy sampling}: 
we can take a secret $s$, directly compute $p = \prob(\calA(\theta) = s) $ and estimate that we need in order of $1 / p$ samples from $\calA$ to generate $s$.

Not all probabilistic processes $\calA$ have a tractable density $\prob(\calA(\theta) = s)$.
For instance, generating a sequence $x_1, \dots, x_T$ and only keeping the last samples $x_i, \dots, x_T$ does not have a tractable density.
On the other hand, vanilla sampling from a language model $\theta$ can be done lazily because the probability of a sequence $x_1, \dots, x_T$ can be expressed as $ \Pi_{i=1}^T~f_{\theta, i}(x_i~|~x_{1:i}) $ where $x_{1:i} = (x_1, \dots, x_{i-1})$ and $f_{\theta, i}(x_i~|~x_{1:i})$ is the conditional probability of $x_i$ given the past.
While this is straightforward for regular sampling, it is also true for temperature and top-$k$ sampling, with the caveat that these probabilities can sometimes be $0$, as illustrated in Figure~\ref{fig:topk_proba}. 

We define $T_k(\theta, x_{1:i})$ the set of top-$k$ predictions and $ \beta_1, \dots, \beta_T$ a set of temperatures.
From there, we can define a top-$k$ and/or temperature language model as 

\begin{equation*}
    \lambda(x_i~|~x_{1:i}) \triangleq \begin{cases}
          \frac{f_\theta(x_i\mid x_{1:i})^{\frac{1}{\beta_i}}}{\sum_{y \in T_k(\theta\mid x_{1:i})} f_\theta(y\mid x_{1:i})^{\frac{1}{\beta_i}}} \, &\text{if} \, x_i \in T_k(x_{1:i}) \\
          0 \, &\text{otherwise.} \\
     \end{cases}
\end{equation*}
We can then define the top-$k$ probability of the whole sequence by applying the chain rule:
\begin{equation*}
    \lambda(x_1, \dots, x_T) = \Pi_{i=1}^T~\lambda(x_i~|~x_{1:i}).
\end{equation*}

\begin{figure}
    \centering
    \includegraphics[width=0.49\textwidth]{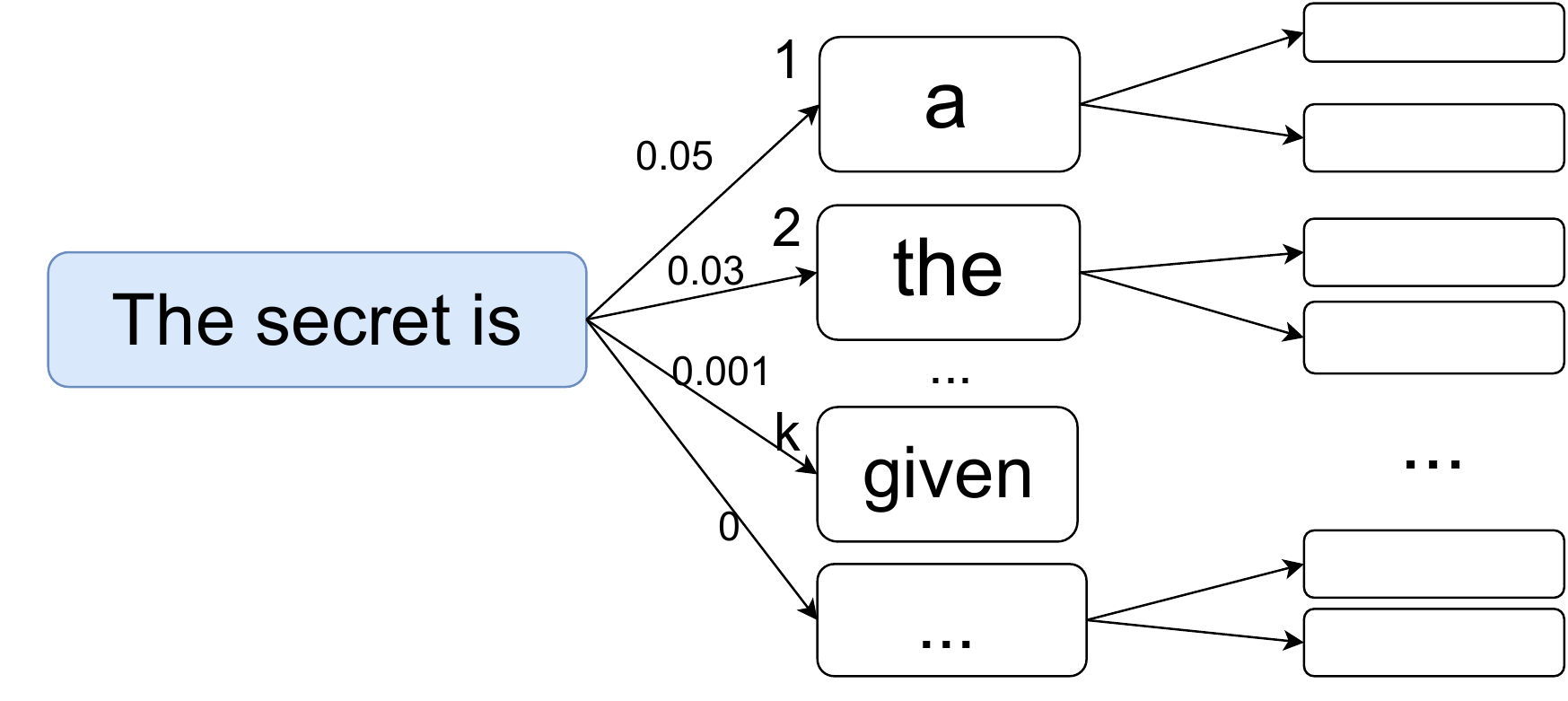}
    \caption{\label{fig:topk_proba}
    Top-$k$ sampling from a language model has a tractable density.
    The probability of a sequence is the product of the conditional probabilities along the path. 
    If one word does not belong to the top-$k$ predictions, its probability will be $0$, thus making the probability of the entire sentence $0$.
    }
    
\end{figure}

Lazy sampling allows us to analyze two of the main strategies of \citet{carlini2021extracting}, but does not apply to the Internet sampling one.
Indeed, Internet sampling first chooses a text $c$ crawled from the public web, and generates iteratively from it $f_\theta(\cdot, c)$, effectively using  $c$ as a prompt.

\paragraph{Experimental Protocol.}
In practice we observe that the secrets revealed do not depend much on the model $\theta$, given a fixed dataset $D$. 
In the remainder of this paper (except for Section~\ref{subsec:tabular}), we will make the assumption that for any $\theta \sim \calM(D)$ and $\theta' \sim \calM(D)$, 
\begin{equation*}
    \log \prob(\calA(\theta) = s) \approx \log \prob(\calA(\theta') = s).
\end{equation*}
To support our assumption, we measured the mean and standard deviation of the log-probability of the secret $\log(p_1)$ for our $n$-gram model detailed in Section~\ref{subsec:tabular}. 
We consider $10{,}000$ models for various levels of DP noise $\sigma$ and observe the probability of leakage is tightly concentrated around the mean. For instance, with $\sigma=2.875$, $\log p_1$ has mean $\simeq -22.5$ and standard deviation $\simeq 1.5$, from which we conclude that the log-probability of a secret does not vary much when re-training a model on a fixed dataset~$D$.

Using this assumption, we can take all sentences $s=x_1, \dots, x_T$ from the training set, compute their probability of being generated by the sampling stage, and compare it to their ``score" that is used at the filtering stage. 

The above strategy allows us to identify samples that would be reconstructed using this particular attack.
Of course, if this particular attack fails to reconstruct some sample $s$, it does not mean that all attacks will fail to reconstruct $s$.

\section{Experiments}

We first experiment with a simple $n$-gram language model to empirically compare the secret leakage with our bounds. %
We then consider a non-private language model that exhibits samples that are the most at risk for a reconstruction attack using the lazy strategy for the \citet{carlini2021extracting} attack. 
Finally, we experiment on private fine-tuning of GPT-2, with parameters yielding low perplexity~\cite{li2021large}, and show that the empirical leakage we measure is even smaller than predicted by our bound.

\subsection{$n$-gram Language Model}
\label{subsec:tabular}

We first experiment on a $n$-gram language model to compare our guarantees against empirical secret leakage. 
We consider a number of time steps $T > 0$ and generate a random sequence of digits $c \in \{0, 1, \dots, D-1=9\}^T$ (the \emph{canary}). 
Our objective is to train the $n$-gram language model on the (fixed) canary $c$ only.

A full $n$-gram language model $f_\theta$ has $\sum_{i=1}^T 10^i \approx 10^T$ parameters.
However, the only parameters of interest in our case are the ones corresponding to $f_\theta(c_i | c_{1:i})$.
These parameters suffice to compute the probability of the sequence $c$ according to the model $f_\theta$, and the rest of the parameters correspond to other ``branches" of the language model.
Hence, we can train $f_\theta$ lazily by only modifying these parameters, which brings the number of parameters down to $10 T$.

We train our model with the softmax loss. %
More specifically, our loss $\mathcal L$ writes
\begin{equation*}
    \mathcal L(\theta) = -\sum_{t=0}^{T-1} \log(u_{t, c_t}),
\end{equation*}
where $c_t$ is the digit at index $t$ in the canary and $u_{t,i}$ is the probability that digit $i$ appears at position $t$ in the canary:
\begin{equation*}
    u_{t, i} = \frac{e^{\theta_{t, i}}}{\sum_{d=0}^{D-1} e^{\theta_{t, d}}}.
\end{equation*}

We experiment with Opacus~\cite{yousefpour2021opacus}, using a randomly generated and fixed canary of length $T=10$.
We set the sampling rate to $q=1$, the clipping factor $C=1$, the DP noise level $\sigma \in [0.5, 10]$ and learning rate $\eta = 0.5 / \sigma$. 
We now wish to measure empirically the leakage $\log\left(p_1/p_0\right)$. 
We have:
\begin{align*}
    p_1 &= \prob(\calA(\calM(D')) = s) \\
    &= \E_{\theta \sim \calM(D')}(f_\theta(c))
\end{align*}
We approximate $p_1$ using Monte-Carlo sampling, by training $N=10{,}000$ models and computing
\begin{align}
\label{eq:empirical_leakage}
    \log(p_1) \approx \log \left( \frac{1}{N} \sum_{i=1}^N f_{\theta_i}(c) \right).
\end{align}
Given that the probabilities $f_{\theta_i}(c)$ can be quite small, we perform computations in the log-space for numerical stability. 
We have $p_0 = D^{-T}$ since the problem is invariant to the permutation of the canary digits. 

Finally, we use the \texttt{RDPAccountant} from Opacus to compute $d_\alpha$ for a range of orders $\alpha \in [1.01, 63]$ and compute $L(p_0)$ as in Equation~\eqref{eq:min_bound}. The results are shown in Figure~\ref{fig:sigma_leak}, where the empirical leakage refers to $\log\left(p_1/p_0\right)$ and where our theoretical bound refers to $L(p_0)$. We observe that the bound is tight for this simple $n$-gram language model.
 
\begin{figure}
    \centering
      \includegraphics[width=0.5\textwidth]{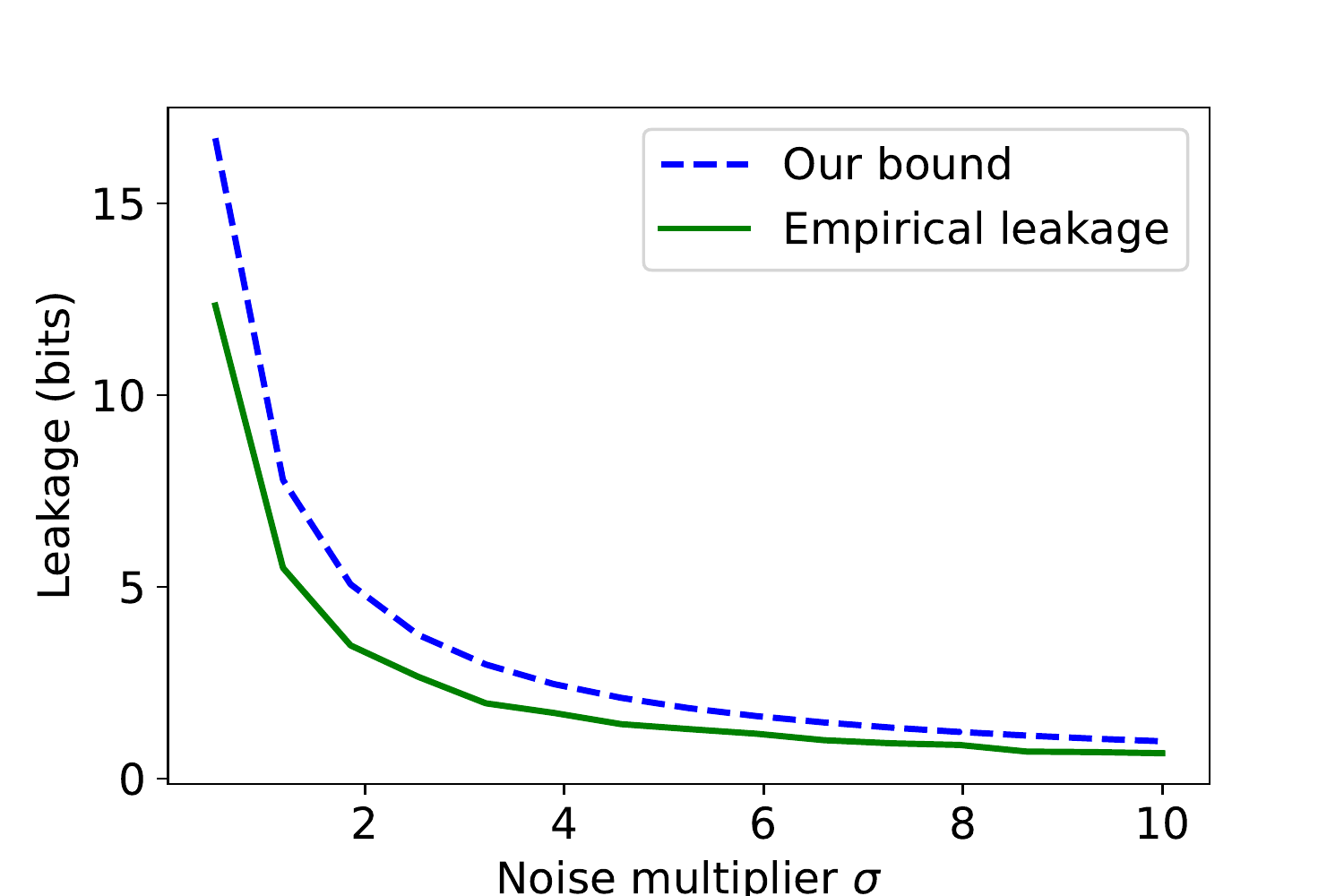}
    \vspace*{-5mm}
    \caption{
    \label{fig:sigma_leak}
    Empirical leakage $\log\left(p_1/p_0\right)$ computed using Equation.~\eqref{eq:empirical_leakage} on $n$-gram language models, compared to our theoretical bound  $L(p_0)$ defined in Equation~\eqref{eq:min_bound}. 
    We can see that our bound is very tight.
    }
\end{figure}

\subsection{Lazy Reconstruction without Defenses}

\begin{figure*}
    \centering
    \includegraphics[width=0.48\textwidth]{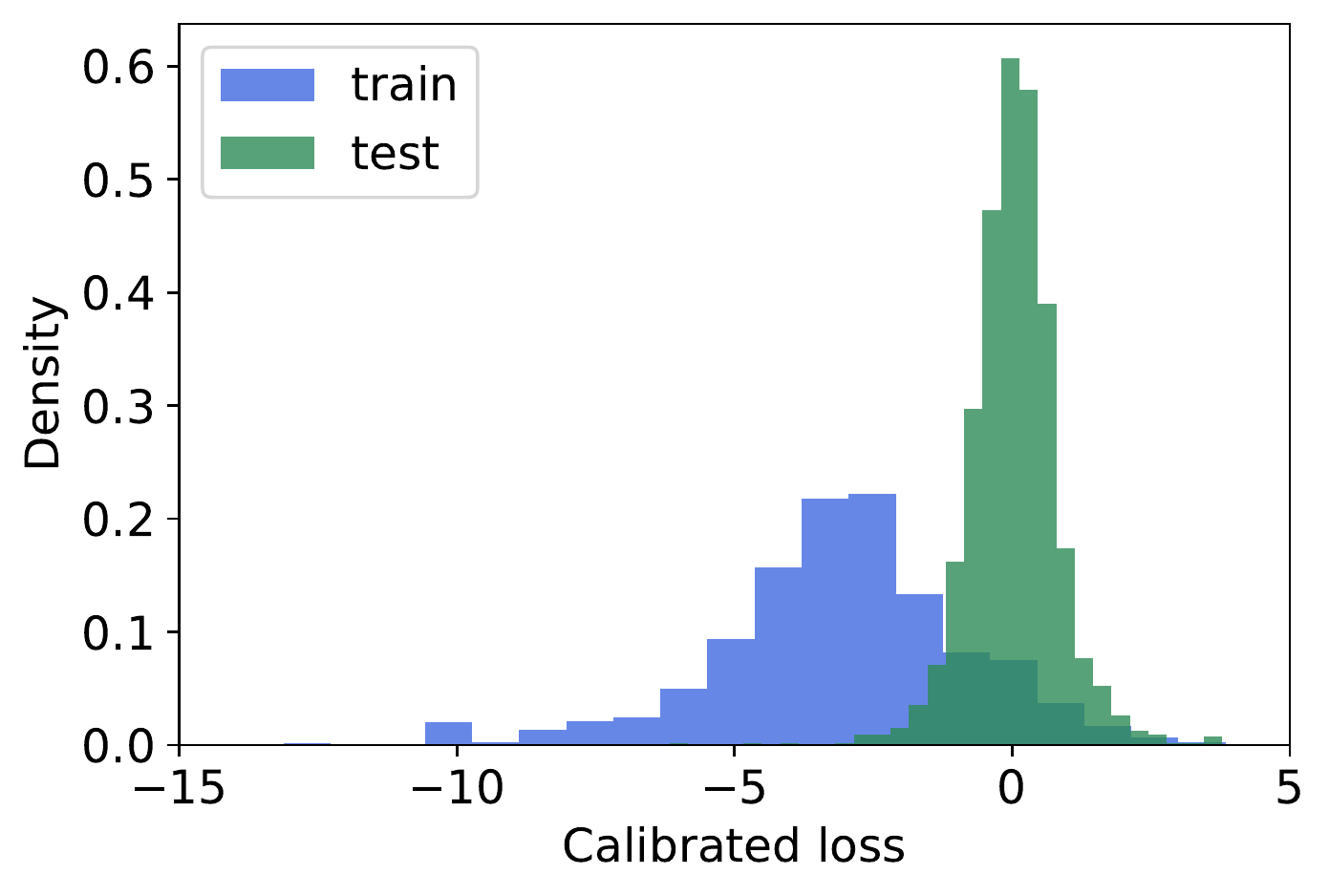}
    \includegraphics[width=0.505\textwidth]{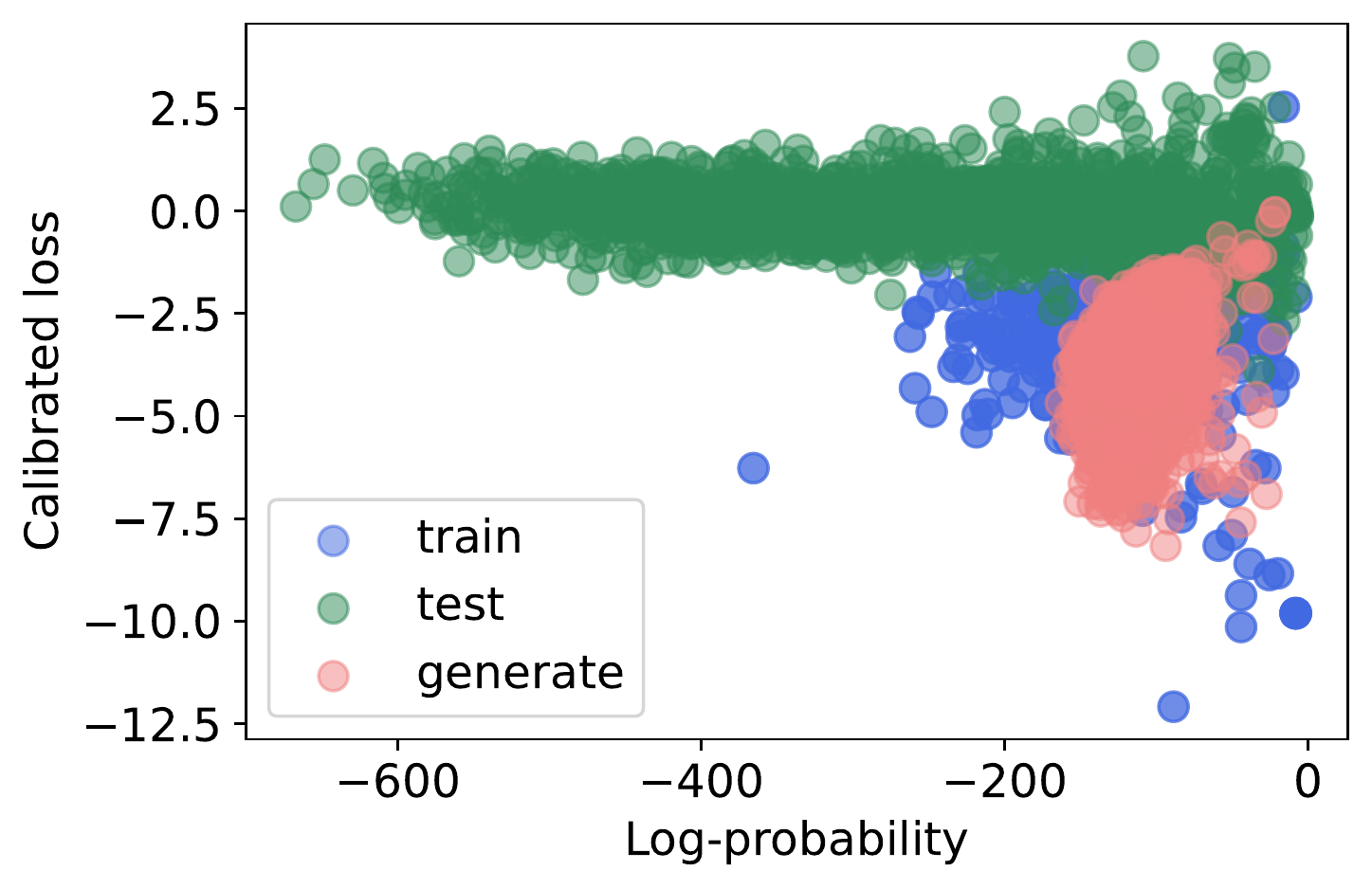}
    \vspace*{-5mm}
    \caption{\label{fig:proba_diff_loss}
    We train a vanilla language model without DP and compute, for a given sentence, its calibrated loss using a model that is not trained on the same data, as well as its log-probability predicted by the target model. Left: we plot the histogram of a random subset of the train and test sets 
    Right: for a given sentence, we plot its calibrated loss ($y$-axis) against its log-probability of being generated by the target model ($x$-axis). 
    In addition to sentences from the train and test set, we generate sentences from the target model and note that their average calibrated loss is lower than for sentences of the training set. 
    The samples from the train set that are the most at-risk for a reconstruction attack are these with high log-probability (high probability of being generated by the attack model) and low calibrated loss (high probability of having been memorized by the target model). See Table~\ref{tab:samples} for selected sentences from the training set.}
\end{figure*}

We now consider a vanilla target model trained without DP on the OpenWebText dataset~\citep{OpenWeb} and want to identify samples from the training set that are most at risk in the event of a reconstruction attack. We compute, for a given sentence~$x$, its calibrated loss defined as
\begin{equation*}
    \mathcal L(\theta, x) - \mathcal L(\theta_0, x),
\end{equation*}
where $\theta$ represents the target model's weights and $\theta_0$ represents a calibration model that is trained on some calibration data. In practice, we consider 3 subsets of OpenWebText as our training, calibration and test set of 1 million samples each. %
We train our models using the setup of \citet{radford2019language}. 
As shown in Figure~\ref{fig:proba_diff_loss} on the left plot, we observe that samples from the training set have a lower calibrated loss, indicating that they have been memorized by the model. 
As expected, test samples have an average calibrated loss around $0$ with a small standard deviation.
On the right plot, we observe that the average calibrated loss for sentences generated by the target model is lower than for sentences from the training set. We consider that the samples from the train set the most at-risk for a reconstruction attack are these with high log-probability (high probability of being generated by the target model) and low calibrated loss (high probability of being memorized by the target model) 

Table~\ref{tab:samples} shows typical examples of elements from the training set with various levels of risk.
Some of the sentences likely to be generated are common (such as the two first rows).
Other sentences however are very likely to be sampled (with a probability close to $10^{-3}$) and have a low calibration score, indicating they were memorized by the model.
Finally, some sentences have a very low probability of being generated in the first place (less than 1 in $10$ billions), so even though they are memorized by the model (low calibration score), they are unlikely to be surfaced.
Note that \citet{carlini2021extracting} showed an analogous plot by displaying perplexity of the trained model \textit{versus} zlib entropy.
There are two major differences: our $x$-axis shows the probability of being generated by the attack model (which is different from perplexity due to temperature and/or top-$k$ sampling), and our analysis is conducted on the \emph{training} set, whereas their analysis is done on generated samples.
In particular, we can have confidence that certain sentences with a very low probability of being generated will not be recovered 
\textit{by this particular strategy}.

\subsection{Real-life Canary Experiments}
\label{subsec:wikitext}

We also experiment with a realistic scenario of privately fine-tuning a large language model. 
We take a pre-trained GPT-2 model~\cite{wolf-etal-2020-transformers} and fine-tune it on Wikitext-103~\citep{Wikitext-103}. 
We select the hyper-parameters to achieve the best perplexity at a given privacy budget.
In particular, we follow the findings of \citet{li2021large} and use a large batch size ($B=1024$), a small clipping threshold ($C=1$), use AdamW~\cite{loshchilov2018decoupled} and freeze the embedding layers. 
We also find that using a low learning rate ($\mathit{lr}=0.0001$) is crucial to avoid divergence.
With these hyperparameters, we are able to fine-tune GPT-2 on Wikitext-103 and reach a perplexity of $45$, which is very close to the non-private fine-tuning perplexity of $38$ that we obtain. 
These results echo the findings of \citet{li2021large}.

We add a canary sentence to the training dataset. 
The canary consists of a prompt (``John Doe's credit card number is") and a secret, a string of 16 random digits. 
Given the trained language model, we approximate $p_1$ by computing the probability of the secret given the prompt $p_1 \approx f_\theta(\textrm{number}~|~\textrm{``John Doe's credit card number is"})$. 
With the chosen privacy parameters ($\sigma \in [0.3, 0.5]$, $q=2.81 \times 10^{-4}$ and $186$k steps), we observe that the empirical leakage is negligible. 
To make our method more sensitive, we upsample the canary up to $280$ times compare to regular samples.

\begin{figure}[b!]
\centering
      \includegraphics[width=0.5\textwidth]{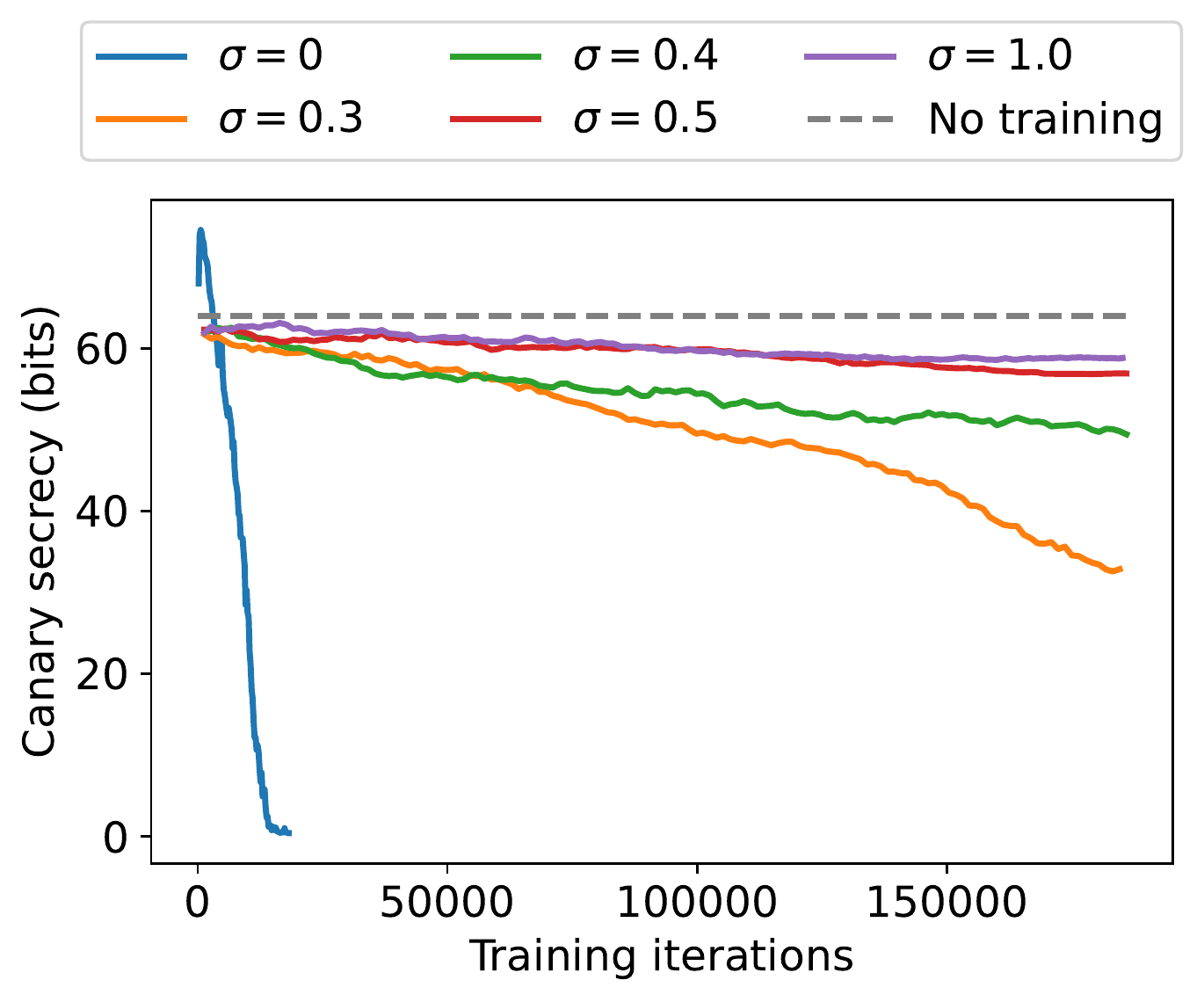}
    \vspace*{-5mm}
    \caption{We empirically measure the leakage of our canary (a fixed 16-digits random number) when training our GPT-2 model on Wikitext with DP. If the model predicts the canary with a given probability $p_1$, we compute the secrecy $b = \log_2(1/p_1)$ ($y$-axis, in bits). Higher DP noise $\sigma$ results in less memorization; the dashed curve is the secrecy of the canary without any training $b = \log_2(1/10^{16})$ (the prior $p_0$).
    With $\sigma=0$, we do not add any DP noise or gradient clipping, and therefore the model is heavily memorizing the canary, and predicts it at the end of the training with a probability $p=67\%$ (hence we stopped the training early).} 
\end{figure}

\begin{table*}[]
    \centering
    \begin{tabular}{p{0.7\linewidth}  p{0.1\linewidth} p{0.1\linewidth}}
    \toprule
    \centering
    Sample      &  Probability & Calibration \\
    \midrule
    \emph{Advertisement}  &  $10^{-2.5}$               & $+0.10$\\
    \emph{Content created by The Daily Caller News Foundation is available without charge [...]} & $10^{-3.5}$               & $-0.18$\\
    \emph{Subject: Games Day News and Rumours: Non Dark Eldar}  &  $10^{-3.6}$ & $-9.81$\\
    \emph{Antarctic expeditioners honour traditional solstice swim}  &  $10^{-10}$ & $-11.3$ \\
    \emph{Homepage image courtesy of the Canadian Space Agency}  &  $10^{-21}$ & $+1.29$ \\
    \bottomrule
    \end{tabular}
    \caption{
    \label{tab:samples}
    We train a vanilla target model without DP on Wikitext-103 and display selected sentences from the training set along with (1) the probability that the target model generates the sample and (2) the loss of the target model for this sample calibrated with the loss of a model that was not trained on this data. A low calibrated loss denotes a sample that is likely to be memorized by the target model and a high probability denotes a sample that is more likely to be generated by the target model. 
    Hence, samples associated with a low calibrated loss and with a high generation probability are considered at risk.
    Note that $-11.3$ is quite low, as most generated samples will have a calibrated loss higher than $-7.5$ as can be seen in Figure~\ref{fig:proba_diff_loss}.
    }
\end{table*}

\paragraph{Results.}
Figure~\ref{fig:sigma_leak} shows the empirical log probability of the secret as a function of the number of steps plotted against the leakage guarantee computed according to Equation~\eqref{eq:min_bound}.
We can see that without privacy ($\sigma=0$), the model almost perfectly memorizes the canary with probability $p_1 = 67\%$, as expected.
With DP, decreasing levels of the noise multiplier $\sigma$ lead to increasing levels of leakage.
Overall, even with our less private version ($\sigma=0.3$), our strategy is not able to fully reconstruct the secret.

\paragraph{Exposure.}
\citet{carlini2019secret} estimate the performance of their attack using the exposure metric, which is an approximation of the optimal number of guesses an adversary would need to correctly estimate a secret.
In practice, we can only upper-bound the exposure because we have no guarantee that any attack is optimal.
The canary secrecy showed in Figure~\ref{fig:sigma_leak} corresponds to exposure, but for the more recent reconstruction attacks of \citet{carlini2021extracting}.

\subsection{Limits of Reconstruction Attacks}

Canary attacks~\cite{carlini2019secret} are useful because the secret is chosen randomly and is thus independent of the rest of the dataset. 
In contrast, with practical reconstruction attacks it is difficult to estimate the randomness of the secret. 
One problem is that there can be multiple identical sequences in the dataset.
\citet{carlini2021extracting} correct for this by measuring $k$-eidetic memorization, \ie, looking at sentences that appear $k$ times or less in the dataset. 
Specifically, they deem sentences to be duplicates if their sets of trigrams overlap by a certain amount.
However, this does not account for knowledge shared across secrets.
For example, if there is a secret in the form ``[animal] drinks [beverage]", animal and beverage can vary in the dataset, so the language model will learn about this general structure during training: seeing ``dog drinks water" can thus make ``cat drinks milk" more likely, even if the second latter does not appear in the dataset.

\section{Conclusion}

This work shows that Rényi Differential Privacy with DP-SGD provides meaningful guarantees against reconstruction attacks.
We also provide a way to analyze the vulnerability of training samples to the reconstruction attack of \citet{carlini2021extracting}.
Overall, the combination of our improved guarantees with the private fine-tuning of language models showcased by \citet{li2021large} allow us to train language models with a perplexity competitive with the state of the art and meaningful privacy guarantees against training data extraction.
Our proposal shows that reconstruction attacks can be defeated with lower values of the noise $\sigma$, or equivalently higher signal-to-noise ratios, which should translate to better accuracies. 
Finally, our work sheds light on the ``higher information" end of the spectrum, which is of interest for two different reasons.
First, reconstruction attacks are more credible because they only require access to a trained model and not to candidate samples.
Second, as shown, reconstruction attacks can be defeated with levels of noise that would otherwise be vacuous for defending against membership inference attacks.
We hope that increased consideration for this threat model will help adoption of differential privacy as a standard in machine learning.

\clearpage
\section*{Acknowledgements}

The authors thank Nicholas Carlini and Graham Cormode for initial feedback on this paper. 

\bibliography{main}
\bibliographystyle{icml2021}

\clearpage
\appendix
\section{Appendix}

\subsection{Probability preservation}
Here we show the steps to get from Eq.~\ref{eq:logratio} to Eq.~\ref{eq:leakage_bound}:
\begin{align*}
\exp(- d_\alpha) p_0^{\alpha / (\alpha - 1)} &\leq p_1 \leq \left( \exp(d_\alpha) p_0 \right)^{(\alpha -1) / \alpha} \\
- d_\alpha + \alpha / (\alpha - 1) \log(p_0) &\leq \log(p_1) \leq ((\alpha -1) / \alpha) (d_\alpha + \log p_0) \\
-d_\alpha +1 / (\alpha - 1) \log(p_0) &\leq \log(p_1 / p_0)  \leq (\alpha -1) d_\alpha/ \alpha - 1/\alpha \log p_0.
\end{align*}

\subsection{Leakage function}
\label{app:proof_leakage}
In this part, we prove that the leakage function $L_2$ is non-decreasing and concave.
We also show that $\xi: b \mapsto L_2(b) - b$ is non-increasing.
We start with two technical lemmas.

\begin{lemma}
\label{lemma:nondecreasing}
Given a family of non-decreasing functions $f_t, t \in \calT$ , the function $f(x) \triangleq \inf_t f_t(x)$ is non-decreasing.
\end{lemma}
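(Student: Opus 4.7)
The plan is to prove the monotonicity of the pointwise infimum directly from the definition of the infimum and the pointwise monotonicity assumption, which is really a one-line argument once unpacked.

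First, I would fix arbitrary points $x \leq y$ in the common domain and aim to show $f(x) \leq f(y)$. The key observation is that for every fixed index $t \in \calT$, the pointwise monotonicity of $f_t$ gives $f_t(x) \leq f_t(y)$, and since $f(x) = \inf_{s \in \calT} f_s(x) \leq f_t(x)$, we obtain $f(x) \leq f_t(y)$ for every $t \in \calT$.

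The final step is to take the infimum over $t$ on the right-hand side: since $f(x)$ is a lower bound of the family $\{f_t(y)\}_{t \in \calT}$, and the infimum is the greatest lower bound, we conclude $f(x) \leq \inf_{t \in \calT} f_t(y) = f(y)$. As $x \leq y$ were arbitrary, $f$ is non-decreasing.

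There is no real obstacle here: the statement is a standard order-theoretic fact, and the only subtlety to be careful about is the direction of the inequality when passing from a single $f_t$ to the infimum. One should move the infimum through the inequality on the correct side (using the defining property ``infimum is a lower bound'' on the left and ``infimum is the greatest lower bound'' on the right), rather than naively applying ``$\inf$'' to both sides at once. No additional hypotheses (continuity, attainment of the infimum, finiteness of $\calT$, etc.) are required, so the argument applies verbatim under the lemma's stated generality.
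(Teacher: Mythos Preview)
Your proof is correct and follows essentially the same route as the paper: fix $x \leq y$, use $f(x) \leq f_t(x) \leq f_t(y)$ for each $t$, and then pass to the infimum over $t$ on the right to conclude $f(x) \leq f(y)$. The only cosmetic difference is that the paper writes $x<y$ while you write $x\leq y$, which is immaterial.
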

\begin{proof}
For $x<y$ and some $t \in \calT$, we have
\begin{align*}
    f(x) = \inf_{t'} f_{t'}(x) &\leq f_t(x) \\
    &\leq f_t(y)
\end{align*}
Since $f(x) \leq f_t(y)$ for all $t$, we have $f(x) \leq \inf_t f_t(y) = f(y)$.
\end{proof}

\begin{lemma}
\label{lemma:concave}
Given a family of concave functions $f_t, t \in \calT$ , the function $f(x) \triangleq \inf_t f_t(x)$ is concave.
\end{lemma}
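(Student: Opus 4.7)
The plan is to verify concavity directly from the definition: for any $x, y$ in the common domain and any $\lambda \in [0,1]$, I want to establish
\[
f(\lambda x + (1-\lambda) y) \;\geq\; \lambda f(x) + (1-\lambda) f(y).
\]
The natural route is to fix an arbitrary index $t \in \calT$, apply concavity of $f_t$ at the point $\lambda x + (1-\lambda) y$, and then exchange each $f_t(x)$ and $f_t(y)$ on the right-hand side for the infimum $f(x)$ and $f(y)$ respectively. This yields a chain of inequalities valid for every $t$, after which taking the infimum over $t$ on the left-hand side collapses $\inf_t f_t(\lambda x + (1-\lambda) y)$ into $f(\lambda x + (1-\lambda) y)$ and produces the required inequality. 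This is the dual of the standard fact that the pointwise supremum of convex functions is convex, so the pattern is essentially forced by the structure of the definition.

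Concretely, the ordered steps are: (i) fix $x, y, \lambda$; (ii) for each $t$, write $f_t(\lambda x + (1-\lambda) y) \geq \lambda f_t(x) + (1-\lambda) f_t(y)$ by concavity of $f_t$; (iii) bound the right-hand side below by $\lambda f(x) + (1-\lambda) f(y)$ using $f_t(x) \geq f(x)$ and $f_t(y) \geq f(y)$, which hold directly from $f = \inf_t f_t$ and the non-negativity of $\lambda$ and $1-\lambda$; (iv) take the infimum of the left-hand side over $t$ to conclude.

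There is essentially no obstacle here, but one small point worth flagging is that the argument implicitly assumes $f(x) > -\infty$ and $f(y) > -\infty$ so that the inequalities $f_t(x) \geq f(x)$ and $f_t(y) \geq f(y)$ are meaningful and the weighted sum is well-defined; if either infimum is $-\infty$ the statement holds trivially since the right-hand side is $-\infty$. This matches the usage of the lemma in the paper, where it is applied in the previous lemma to the family $\{ l(\alpha, \cdot) \}_{\alpha > 1}$, each member of which is a finite concave function of $p_0$ (a positive affine function of $\log p_0$), so the conditions for the lemma are met and the conclusion yields concavity of $L(p_0)$ and hence of $L_2(b)$.
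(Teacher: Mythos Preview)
Your proof is correct but takes a different route from the paper. The paper argues via the hypograph characterization: defining $\calH(g) \triangleq \{(x,y) : y \leq g(x)\}$, it notes that $g$ is concave iff $\calH(g)$ is convex, observes that $\calH(f) = \bigcap_{t \in \calT} \calH(f_t)$, and concludes since an intersection of convex sets is convex. Your direct verification of the defining inequality is equally standard and arguably more elementary, since it requires no auxiliary characterization; the hypograph argument is a bit more geometric and absorbs the $-\infty$ edge case automatically without the separate remark you (correctly) added. One small inaccuracy in your contextual comment: the lemma is not applied ``in the previous lemma'' but rather \emph{after} both lemmas are established, to the family $f_\alpha(b) = d_\alpha \tfrac{\alpha-1}{\alpha \log 2} + \tfrac{b}{\alpha}$, which are affine (hence concave) in $b$.
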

\begin{proof}
For a function $g$, we define its hypograph $\calH(g) \triangleq \{ (x, y) ~|~ y \leq g(x) \}$.
A function $g$ is concave iff its hypograph $\calH(g)$ is a convex set.
Each function $f_t$ being concave, its hypograph $\calH(f_t)$ is a convex set.
The hypograph $\calH(f) = \cap_{t \in \calT} \calH(f_t)$ is an intersection of convex sets and is thus itself a convex set.
Thus $f$ is concave.
\end{proof}

Now let us apply these lemmas to the leakage function.
We have $L_2(b) = \min_\alpha f_\alpha(b) $ with each $f_\alpha(b) = d_\alpha \frac{\alpha -1}{\log(2) \alpha} + \frac{b }{\alpha}$.
Each function $f_\alpha$ is linear, and hence concave, and non-decreasing because $1/\alpha > 0$.
We can thus apply Lemma~\ref{lemma:concave} and Lemma~\ref{lemma:nondecreasing} to conclude that the leakage function $L_2$ is concave and non-decreasing.

The function $\xi: b \mapsto L_2(b) - b$ is a sum of a concave and a linear function and is thus concave. 
Thus, its derivative $\frac{\partial \xi}{\partial b}$ is non-increasing.
Given that $\frac{\partial \xi}{\partial b}(0) = \frac{\partial L_2}{\partial b}(0) - 1 = 0$, $\frac{\partial \xi}{\partial b} \leq 0$ and thus $\xi$ is non-increasing.

\end{document}